\newtheorem{theorem}{Theorem}
\newtheorem{corollary}{Corollary}
\newtheorem{lemma}{Lemma}
\newtheorem{proposition}{Proposition}
\newtheorem{remark}{Remark}
\newcommand{\bR}{\mathbb{R}} 
\newcommand{\bE}{\mathbb{E}}
\newcommand{\tr}{\text{trace}}
\newcommand{\mmax}{m_{\max}}
\newcommand{\mmin}{m_{\min}}
\newcommand{\bfone}{\textbf{1}}
\newcommand{\xhat}{\hat{X}}
\newcommand{\cl}{r}
\newcommand{\innerprod}[2]{\langle {#1}, {#2}\rangle}
\newcommand{\bas}[1]{\begin{align*}#1\end{align*}}
\newcommand{\ba}[1]{\begin{align}#1\end{align}}
\newcommand{\beq}[1]{\begin{equation}#1\end{equation}}
\newcommand{\bsplt}[1]{\begin{split}#1\end{split}}
\newcommand{\sdp}{SPUR\xspace}
\newcommand{\lnuc}{L1+nuc\xspace}
\newcommand{\diag}{\text{diag}}
\newcommand{\bk}{\color{black}}
\newcommand{\bmv}{\bm{m}}
\newcommand{\vertiii}[1]{{\left\vert\kern-0.25ex\left\vert\kern-0.25ex\left\vert #1 
    \right\vert\kern-0.25ex\right\vert\kern-0.25ex\right\vert}}
\definecolor{alizarin}{rgb}{0.82, 0.1, 0.26}
\begin{document}

\twocolumn[
\aistatstitle{Provable Estimation of the Number of Blocks in Block Models}
\aistatsauthor{ Bowei Yan \And Purnamrita Sarkar \And  Xiuyuan Cheng }
\aistatsaddress{ University of Texas at Austin \And  University of Texas at Austin \And Duke University } ]

\begin{abstract}
	Community detection is a fundamental unsupervised learning problem for unlabeled networks which has a broad range of applications. Many community detection algorithms assume that the number of clusters $r$ is known apriori. 
	 In this paper, we propose an approach based on semi-definite relaxations, which does not require prior knowledge of model parameters like many existing convex relaxation methods and recovers the number of clusters and the clustering matrix exactly under a broad parameter regime, with probability tending to one.  On a variety of simulated and real data experiments, we show that the proposed method often outperforms  state-of-the-art techniques for estimating the number of clusters. 
\end{abstract}

\section{Introduction}
As a fundamental problem in network analysis, community detection has drawn much attention from both theorists and practitioners. Most existing methods require the prior knowledge of the true number of clusters, which is often unavailable in real data applications. 
In this paper we mainly focus on provably estimating the number of clusters in a network.

While it is tempting to use a two-stage procedure~\cite{chen2014improved} where the number of clusters is estimated first and then used  as an input for clustering, an erroneous estimation on the number of clusters can deteriorate the clustering accuracy. Instead, we design an algorithm which  estimates the true number of clusters and recovers the cluster memberships simultaneously, with provable guarantees. 

In this paper, we focus on the widely-used Stochastic Block Models (SBM)~\cite{holland1983stochastic}. The model assumes the probability of an edge between two nodes are completely determined by the unknown cluster memberships of the nodes. Essentially, this imposes stochastic equivalence, i.e. all nodes in the same cluster behave identically in a probabilistic sense. Despite its simplicity, the SBM is used as a building block in more sophisticated models like the Degree Corrected Block Models~\cite{karrer2011stochastic} and Mixed Membership Block Models~\cite{airoldi2008mixed} and has been applied successfully for clustering real world networks.

Semi-definite programming (SDP) relaxations for network clustering have been widely studied and many different formulations have been proposed. It has been empirically observed that these methods have better clustering performance compared to spectral methods~\cite{amini2014semidefinite, yan2016robustness, chen2014improved}. As shown by~\cite{chen2012clustering,amini2014semidefinite}, SDPs arise naturally when the likelihood of a SBM with equal cluster sizes is relaxed. SDP returns a relaxation of the clustering matrix, which is a $n\times n$ ($n$ being the number of nodes) symmetric matrix whose $ij^{th}$ element is one if nodes $i$ and $j$ belong to the same cluster and zero otherwise.  We present a detailed discussion on related work in Section \ref{sec:related}. In this work, we use the SDP formulation proposed by \cite{peng2007approximating}, which uses a normalized variant of the clustering matrix. Similar relaxations have been used to study $k$-means clustering for sub-gaussian mixtures \cite{mixon2016clustering} and SBMs~\cite{yan2016convex}.

For community detection in SBM, an algorithm is considered effective if it is asymptotically consistent.
There are two types of consistency in the literature. When the number of nodes in the graph is large enough, the network is sufficiently dense, and the signal (usually defined by the separation between intra-cluster probability and inter-cluster probability) is strong enough, strongly consistent methods recover the ground truth labels exactly, while the weakly consistent methods recover a fraction of labels correctly where the fraction approaches one as $n$ goes to infinity.

There have been a number of SDP relaxations for general unbalanced cluster sizes which have been shown to be strongly consistent~\cite{perry2015semidefinite,hajek2016achievinga,cai2015robust}. One can argue that these methods readily render themselves to estimation of the number of blocks $r$. The idea would be to run the SDP with different values of  $r$, and for the correct one the clustering matrix will be the true clustering matrix with high probability. However, all these methods require the knowledge of model parameters. Furthermore, they work in the unequal cluster size setting by introducing an additional penalty term, which requires further tuning. Hence  each run with a different choice of $r$ would have an internal tuning step adding to the already expensive computation of the SDP. In this paper, we propose a formulation that is a) entirely tuning free when the number of clusters is known, and b) when it is unknown, is able to recover the number of clusters and the clustering matrix in one shot.
\bk


Furthermore, our method provably works in the weakly assortative setting, whereas 
 the usual necessary separation condition for recovery is that the maximal inter-cluster connecting probability (think of this as noise) is smaller than the minimal intra-cluster connecting probability (the signal) by a certain margin. This separation condition is known as strong assortativity. In contrast, our work only requires that for each node, the probability of connecting to the nodes in its own cluster is greater by a margin than the largest probability of connecting with nodes in other clusters. This property is called weak assortativity. It is not hard to see that weakly assortative models are a superset of strongly assortative models. Weak assortativity was first introduced in \cite{amini2014semidefinite}, who establish exact recovery under this weaker condition for SDPs for blockmodels with \textit{equal sized} communities.

In Sec~\ref{sec:exp} we sketch a rather interesting empirical property of our algorithm (also pointed out in~\cite{perry2015semidefinite}); namely it can identify different granularities of separations as a byproduct. For example, the tuning phase, which we sketch in Section~\ref{sec:exp}, finds different substructures of the network as it searches over different  tuning parameters. For example, if there are $K$ meta clusters which are more well separated than the rest, then as we tune, we will first find these meta-clusters, and then finer substructures within them. While this is not the main goal of our paper, it indeed makes our approach ideal for exploratory analysis of networks. We also leave the theoretical analysis of finding multi-resolution clusterings for future work. 

We will formalize these concepts in Section~\ref{sec:setup} and discuss the related work in more detail in Section~\ref{sec:related}. Section~\ref{sec:mainres} contains our main theoretical contributions and finally, in Section~\ref{sec:exp} we demonstrate the efficacy of our algorithm compared to existing methods on a variety of simulated and real networks. 

\section{Problem Setup and Notations}
\label{sec:setup}
Assume $(S_1,\cdots, S_{\cl})$ represent a $\cl$-partition for $n$ nodes $\{1,\cdots,n\}$. Let $m_i=|S_i|$ be the size of each cluster, and let  $m_{\min}$ and $m_{\max}$ be the minimum and maximum cluster sizes respectively. 
We denote by $A$ the $n\times n$ binary adjacency matrix with the true and unknown membership matrix $Z=\{0,1\}^{n\times \cl}$,
\begin{align}
\label{eq:data-model}
\tag{SBM($B,Z$)} &P(A_{ij}=1|Z) = Z_i^TBZ_j  \quad \forall i\ne j, \\
&P(A_{ii}=0)=1,\quad Z^TZ=diag(\bmv),
 \end{align}
where $B$ is a $\cl \times \cl$ matrix of within and across cluster connection probabilities and $\bmv$ is a length $r$ vector of cluster sizes. The elements of $B$ can decay with graph size $n$. In this paper we focus on the regime where the average expected degree grows faster than logarithm of $n$. In this regime, it is possible to obtain strong or weak consistency.

Given any block model, the goal for community detection is to recover the column space of $Z$.  For example if we can solve $ZZ^T$ or its normalized variant $Zdiag(\bmv)^{-1}Z^T$, then the labels can be recovered from the eigenvectors of the clustering matrix. 
\paragraph{The normalized clustering matrix:}
In this paper we focus on recovering the following normalized version:
\vspace{-.5em}
\ba{
X_0=Z\diag(\bmv)^{-1}Z^T\label{eq:x0}
}
It can be easily checked that  $X_0\bfone_n =\bfone_n$, since $Z\bfone_k=\bfone_n$. Furthermore, $X_0$ is positive semi-definite and its trace (which equals its nuclear norm as well) equals the number of clusters $r$. 

\paragraph{Assortativity (strong vs.\ weak):}
Assortativity is a condition usually required in membership recovery. The strong assortativity (see Eq.~\eqref{eq:strong-assortativity})  requires the smallest diagonal entry to be greater than the largest off-diagonal entry.
\begin{align}
\min_{k} B_{kk}-\max_{k\ne \ell} B_{k\ell}&>0\label{eq:strong-assortativity}\\
\min_{k} \left( B_{kk}-\max_{\ell \ne k} B_{k\ell} \right) &>0.\label{eq:weak-assortativity}
\end{align}
 \cite{amini2014semidefinite} first introduces an SDP that provably achieves exact recovery for weakly assortative models (Eq.~\eqref{eq:weak-assortativity}) with \textit{equal cluster sizes}, i.e., 
compared with \eqref{eq:strong-assortativity}, weak assortativity only compares the probability within the same row and column; it requires that any given cluster $k$, should have a larger probability of connecting within itself than with nodes in any other cluster. It is easy to check that strong assortativity indicates weak assortativity and not vice versa. 

For any matrix $X\in \bR^{n\times n}$, denote $X_{S_kS_\ell}$ as the submatrix of $X$ on indices $S_k\times S_\ell$, and $X_{S_k}:=X_{S_k\times S_k}$. Let $\bfone$ be all one vector, and $\bfone_{S_k}\in \bR^n$ be the indicator vector of $S_k$, equal to one on $S_k$ and zero elsewhere. 
The inner product of two matrices is defined as $\innerprod{A}{B}=\tr(A^TB)$. We use $\circ$ to denote the Schur (elementwise) product of two matrices. Standard notations for complexity analysis $o, O, \Theta, \Omega$ will be used. And those with a tilde are to represent the same order ignoring log factors.

\section{Related Work}
\label{sec:related}

While most community detection methods assume that the number of communities ($r$) is given apriori, there has been much empirical and some theoretical work on estimating $r$ from networks. 

\textbf{Methods for estimating $r$: }
A large class of methods chooses $r$ by maximizing some likelihood-based criterion. While there are notable methods for estimating $r$ for \textit{non}-network structured data from mixture models~\cite{pelleg-xmeans,hamerly-gmeans,biernacki2000assessing,Patterson_populationstructure}, we will not discuss them here. 
 
Many likelihood-based methods use variants or approximations of Bayesian Information
Criterion (BIC); BIC, while a popular choice for model selection, can be computationally expensive since  it depends on the likelihood of the observed data. Variants of the Integrated Classification Likelihood (ICL, originally proposed by~\cite{biernacki2000assessing}) were proposed in \cite{daudin2008mixture, latouche2012variational}. Other BIC type criteria are studied in~\cite{mariadassou2010uncovering, saldana2017many,mcauley2012learning}. 

In~\cite{hofman2008bayesian} a computationally efficient variational Bayes technique is proposed to estimate $r$. This method is empirically shown to be more accurate than BIC and ICL and faster than Cross Validation based approaches~\cite{chen2016network}. 
A Bayesian approach with a new prior and an efficient sampling scheme is used  to estimate $r$ in~\cite{riolo2017efficient}.  While the above methods are not provable, a provably consistent likelihood ratio test is proposed to estimate $r$ in~\cite{wang2015likelihood}. 

Another class of methods is based on the spectral approach. The idea is to estimate $r$ by the number of ``leading eigenvalues'' of a suitably normalized adjacency matrix ~\cite{owen2009bi, josse2012selecting, chatterjee2015matrix, fishkind2013consistent}. Of these the USVT estimator~\cite{chatterjee2015matrix} uses random matrix theory to estimate $r$ simply by thresholding the empirical eigenvalues of the adjacency matrix appropriately. 
 In~\cite{bordenave2015non} it is shown that the informative eigenvalues of the non-backtracking matrix are real-valued and separated from the bulk under the SBM. 
 In~\cite{le2015estimating}, the spectrum of the non-backtracking matrix and the Bethe-Hessian operator are used to estimate $r$, the later being shown to work better for sparse graphs. 

\cite{abbe2015recovering} proposes a degree-profiling method achieving the optimal information theoretical limit for exact recovery. This agnostic algorithm first learns a preliminary classification based on a subsample of edges, then adjust the classification for each node based on the degree-profiling from the preliminary classification. However it involves a highly-tuned and hard to implement spectral clustering step (also noted by \cite{perry2015semidefinite}). It also requires specific modifications when applied to real world networks (as pointed out by the authors) .

In~\cite{zhao2011community}, communities are sequentially extracted from a network; the stopping criterion uses a bootstrapped approximation of the null distribution of the statistic of choice. In ~\cite{bickel2016hypothesis}, the null distribution of a spectral test statistic is derived, which is used to test $r = 1$ vs $r > 1$ at each step of a recursive bipartitioning algorithm. 
A generalization of this approach for testing a null hypothesis for $r$ blocks can be found in~\cite{lei2016goodness}. While the algorithm in~\cite{bickel2016hypothesis} often produces 
over-estimates of $r$,~\cite{lei2016goodness}'s hypothesis test depends on a preliminary fitting with an algorithm which exactly recovers the parameters. The final accuracy heavily depends on the accuracy of this fit.
Network cross-validation based methods have also been used for selecting $r$. The cross-validation can be carried out either via node splitting~\cite{airoldi2008mixed}, or node-pair splitting~\cite{hoff2008modeling, chen2016network}; the asymptotic consistency of these methods are shown in~\cite{chen2016network}. 
We conclude with a comparison of our approach to other convex relaxations.

\paragraph{Comparison to other convex relaxations}
In recent years, SDP has drawn much attention in handling community detection problems with Stochastic Block Models. Various of relaxations have been shown to possess strong theoretical guarantees in recovering the true clustering structure without rounding~\cite{amini2014semidefinite, hajek2016achievinga, hajek2016achievingb, cai2015robust,perry2015semidefinite,montanari2015semidefinite, guedon2014community}. Most of them aim at recovering a binary clustering matrix, and show that the relaxed SDP will have the ground truth clustering matrix as its unique optimal solution.
For unbalanced cluster sizes, an extra penalization is often introduced which requires additional tuning~\cite{cai2015robust, hajek2016achievinga, perry2015semidefinite}. While one can try different choices of $r$ for these SDPs until achieving exact recovery, the procedure is slower since each run would need another internal tuning step.

SDP with a normalized clustering matrix was introduced by~\cite{peng2007approximating}. They have been used for network clustering~\cite{yan2016convex} and for the relaxation of $k$-means clustering of non-network structured data~\cite{peng2007approximating, mixon2016clustering} . 
\beq{\tag{SDP-PW}
\bsplt{
\max &\quad  \innerprod{A}{X}\\
s.t. &\quad X\succeq 0, X\ge 0, \\
& \quad X\bfone=\bfone, \tr(X)=r
}
\label{eq:sdp-knownk}
}
 However the formulation in \cite{yan2016convex} requires an additional parameter as an lower bound on the minimum size of the clusters; loose lower bounds can empirically deteriorate the performance. Also the authors only establish weak consistency of the solution.

Some of these methods do not require the knowledge of $r$ in the constraints, but instead have the dependency implicitly. In~\cite{chen2015convexified} a convexified modularity minimization for Degree-corrected SBM is proposed, which also works for SBMs as a special case of degree corrected models. The authors suggest one over total number of edges as the default value for the tuning parameter, but when dealing with delicate structures of the network, this suggested value can be sub-optimal and further tuning is required. The procedure also requires $r$ for the final clustering of the nodes via Spectral Clustering from the clustering matrix.

A different convex relaxation motivated by low-rank matrix recovery is studied in~\cite{chen2014improved}. Here, first the eigenspectrum of $A$ is used to estimate $r$, which is subsequently used to estimate tuning parameters required in the main algorithm. We can also tune the tuning parameter with other heuristics, but as the theorem in that paper implies, the tuning parameter needs to lie between the minimal intra-cluster probabilities and maximal inter-cluster probabilities, which is only feasible for strongly assortative settings. We provide more details in the experimental section.

\paragraph{Hierarchical clustering structures}
A phenomenon that has been observed \cite{perry2015semidefinite, chen2014improved} is that convex relaxations can be used to find hierarchical structures in the networks by varying the tuning parameter.  In the experimental section we demonstrate this with some examples.

\paragraph{Separation conditions}
In terms of the separation conditions, most aforementioned convex relaxations are consistent in the dense regime under {\it strong assortativity} except~\cite{amini2014semidefinite} and~\cite{yan2016convex}. However, \cite{amini2014semidefinite} only prove exact recovery of clusters for equal sized clusters, whereas~\cite{yan2016convex} only show weak consistency and require the knowledge of additional parameters like the minimum cluster size. \cite{perry2015semidefinite} shows exact recovery while matching the information theoretical lower bound, which is not the goal of this paper. 

In this paper, we compare our algorithm with noted representatives from the related work. From the Spectral methods, we compare with the USVT estimator and the Bethe Hessian based estimator~\cite{le2015estimating}, which has been shown to empirically outperform a variety of other provable techniques like~\cite{wang2015likelihood} and~\cite{chen2016network}. For these methods, we first estimate $r$ and then use the Regularized Spectral Clustering~\cite{amini2013pseudo,le2015sparse} algorithm to obtain the final clustering.
From the convex relaxation literature, we compare with~\cite{chen2014improved} and~\cite{chen2015convexified}, neither of which require $r$ for estimating the clustering  except for the final clustering step. 
\section{Main Result}
\label{sec:mainres}
In various SDP relaxations for community detection under SBMs, the objective function is taken as the linear inner product of the adjacency matrix $A$ and the target clustering matrix $X$, some formulations also have some additional penalty terms. The inner product objective can be derived from several different metrics for the opitimality of the clustering, such as likelihood or modularity. The penalty terms vary depending on what kind of a solution the SDP is encouraged to yield.
For example, in low-rank matrix recovery literature, it is common practice to use the nuclear norm regularization to encourage low-rank solution. For a positive semi-definite matrix, the nuclear norm is identical to its trace.
When the number of clusters $r$ is unknown,  we consider the following SDP.
\beq{
\bsplt{
\max & \quad \tr(AX)-\lambda \tr(X) \\
s.t. & \quad X\succeq 0, X\ge 0, X\bfone=\bfone,
}
\label{eq:sdp-lambda}\tag{SDP-$\lambda$}
}
where $\lambda$ is a tuning parameter, and $X\ge 0$ is an element-wise non-negativity constraint.
The following theorem guarantees the exact recovery of the ground truth solution matrix, when $\lambda$ lies in the given range for the tuning parameter.

\begin{theorem}
Let $\hat{X}$ be the optimal solution of \eqref{eq:sdp-lambda} for $A\sim \text{SBM}(B,Z)$ where $\mmin$ and $\mmax$ denote the smallest and largest cluster sizes respectively. Define the separation parameter $\delta = \min_k (B_{kk}-\max_{\ell\ne k}B_{k\ell})$. If 
\bas{
&c_1\max_k\sqrt{m_kB_{kk}}+c_2\sqrt{n\max_{k\ne \ell}B_{k\ell}} \le \lambda\\
&\qquad \le \mmin \left( \delta- \max_{k,\ell}\sqrt{\frac{B_{k\ell}\log m_k}{m_k}}\right)
} then $\hat{X}=X_0$ with probability at least $1-n^{-1}$ provided
\ba{
\delta \ge 2\sqrt{6\log n}\max_k \sqrt{\frac{B_{kk}}{m_k}}+ &6\max_{\ell\ne k}\sqrt{\frac{B_{k\ell}\log n}{m_{\min}}} \nonumber \\
& +\frac{c\sqrt{np_{\max}}}{\mmin}
\label{eq:separation}
}
\label{th:unspecified}
\end{theorem}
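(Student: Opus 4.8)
The plan is to use a primal--dual (KKT) certificate argument for \eqref{eq:sdp-lambda}. Introducing a matrix dual variable $\Lambda\succeq0$ for the PSD constraint, an elementwise $Q\ge0$ for $X\ge0$, and a vector $\mu\in\bR^n$ for $X\bfone=\bfone$, stationarity at a primal optimum reads
\[
\Lambda \;=\; \tfrac12(\mu\bfone^T+\bfone\mu^T)\;-\;(A-\lambda I)\;-\;Q .
\]
Hence it suffices to exhibit a triple $(\mu,Q,\Lambda)$ obeying this identity with $Q\ge0$, $\Lambda\succeq0$, the complementary-slackness relations $Q\circ X_0=0$ and $\Lambda X_0=0$, and the nondegeneracy conditions that $\ker\Lambda$ is \emph{exactly} the $r$-dimensional "block-constant" subspace $\mathcal T=\mathrm{span}\{\bfone_{S_k}:k\in[r]\}$ and $Q_{ij}>0$ whenever $X_{0,ij}=0$; standard strict-complementarity arguments then show $X_0$ is the unique optimum.

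First I would construct the candidates. Since $X_0$ is strictly positive on the within-cluster blocks and zero across clusters, $Q\circ X_0=0$ forces $Q$ to vanish on every diagonal block; I take $\mu$ constant on each cluster, $\mu|_{S_k}\equiv\mu_k$, and pick the off-diagonal blocks $Q_{S_kS_\ell}$ ($k\ne\ell$) to cancel the mean cross-block entries of $A$ up to a small controlled slack. Then $\Lambda$ is defined by the stationarity identity, and the $r$ scalars $\mu_k$ are pinned down by imposing $\Lambda\bfone_{S_k}=0$ for every $k$ (a small linear system, solvable because $\bE A\,\bfone_{S_k}$ is block-constant up to lower order). By construction $\mathcal T\subseteq\ker\Lambda$, so $\Lambda X_0=0$ and $\langle\Lambda,X_0\rangle=0$ hold automatically; the work is to verify $\Lambda\succeq0$ with no extra kernel, and $Q\ge0$.

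Next, the analytic core: decompose $\Lambda=\bar\Lambda+\Delta$, where $\bar\Lambda$ is built from $\bE A$ (the clean block-structured part) and $\Delta$ collects the fluctuation $A-\bE A$ entering through $A$, $Q$ and $\mu$. For $\bar\Lambda$ I would show it is PSD with kernel exactly $\mathcal T$ and smallest eigenvalue on $\mathcal T^\perp$ at least of order $m_{\min}\bigl(\delta-\max_{k,\ell}\sqrt{B_{k\ell}\log m_k/m_k}\bigr)-\lambda$; this is where weak assortativity is used, and where the \emph{upper} endpoint of the $\lambda$-window originates. Concretely, each diagonal block of $\bar\Lambda$ is a positive multiple of $m_kI-\bfone\bfone^T$ with a $\lambda I$ correction, while the cross-block blocks are row-wise dominated using $\delta$; a Gershgorin / Schur-complement reduction to an $r\times r$ quotient matrix closes the deterministic bound. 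Dual feasibility $Q\ge0$ on the off-diagonal blocks forces the subtracted $A$-entries and $\Lambda$-blocks to be absorbed by the spread-out $\lambda$-mass, which is exactly the requirement $\lambda\gtrsim\max_k\sqrt{m_kB_{kk}}+\sqrt{n\max_{k\ne\ell}B_{k\ell}}$, i.e.\ the \emph{lower} endpoint of the window.

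Finally I would insert concentration estimates for $A-\bE A$: the operator-norm bound $\|A-\bE A\|\le c\sqrt{np_{\max}}$ with high probability in the super-logarithmic degree regime (this controls $\|\Delta\|$ on $\mathcal T^\perp$ and yields the $c\sqrt{np_{\max}}/m_{\min}$ term of \eqref{eq:separation}), together with Bernstein bounds on the row- and block-sums of $A-\bE A$ restricted to individual blocks (these control the parts of $\Delta$, $\mu$ and $Q$ that do not vanish on $\mathcal T$, producing the $2\sqrt{6\log n}\max_k\sqrt{B_{kk}/m_k}$ and $6\max_{\ell\ne k}\sqrt{B_{k\ell}\log n/m_{\min}}$ terms). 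Combining via Weyl's inequality, $\Lambda=\bar\Lambda+\Delta\succeq0$ with $\ker\Lambda=\mathcal T$ exactly, provided \eqref{eq:separation} holds and $\lambda$ lies in the stated interval---which is nonempty precisely because of \eqref{eq:separation}. The step I expect to be the main obstacle is this eigenvalue bookkeeping for $\Lambda$ under perturbation: one must control $\Delta$ not only in operator norm but in a manner compatible with keeping $\ker\Lambda$ of dimension \emph{exactly} $r$, while simultaneously reconciling the two-sided $\lambda$-window---it is this joint requirement that couples all three fluctuation terms and weak assortativity into the single separation condition \eqref{eq:separation}.
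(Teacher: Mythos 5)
Your overall architecture---a KKT certificate with a PSD dual $\Lambda$, an elementwise dual $Q$ for $X\ge 0$, and a vector dual for $X\bfone=\bfone$---is exactly the paper's, but two concrete steps in your construction fail. First, you take $\mu$ constant on each cluster and claim the $r$ scalars $\mu_k$ can be pinned down by imposing $\Lambda\bfone_{S_k}=0$. Since complementary slackness forces $Q=0$ on every diagonal block (where $X_0>0$), the diagonal block of your stationarity identity reads $\Lambda_{S_k}=\mu_k E_{m_k}-A_{S_k}+\lambda I_{m_k}$, so $\Lambda_{S_k}\bfone_{m_k}=(\mu_k m_k+\lambda)\bfone_{m_k}-A_{S_k}\bfone_{m_k}$: this is $m_k$ equations in the single unknown $\mu_k$, inconsistent unless every node of $S_k$ has exactly the same within-cluster degree, which fails with high probability (degrees fluctuate by order $\sqrt{m_kB_{kk}}$). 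Complementary slackness $\Lambda X_0=0$ must hold exactly, not ``up to lower order.'' The paper's fix is to make the row-sum dual node-dependent, $\alpha_{S_k}=\frac{1}{m_k}\left(A_{S_k}\bfone_{m_k}+\phi_k\bfone_{m_k}\right)$ with $\phi_k=-\frac12\left(\beta+\bfone_{m_k}^TA_{S_k}\bfone_{m_k}/m_k\right)$, which absorbs the degree fluctuations and, crucially, turns the condition $\Gamma_{uv}\ge0$ into a node-level degree comparison $\bar d_u(S_k)-\bar d_u(S_\ell)+\cdots\ge \beta/(2m_k)+\beta/(2m_\ell)$; that is precisely where weak assortativity enters.

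Second, you attach the two endpoints of the $\lambda$-window to the wrong constraints. On $\mathcal{T}^\perp$ the population matrix $P=ZBZ^T$ annihilates every test vector, so the quadratic form collapses to $u^T\Lambda u=-u^T(A-P)u+\lambda\|u\|^2$; positive semidefiniteness of $\Lambda$ therefore requires $\lambda\ge\|A-P\|\asymp c_1\max_k\sqrt{m_kB_{kk}}+c_2\sqrt{n\max_{k\ne\ell}B_{k\ell}}$, i.e.\ it produces the \emph{lower} endpoint---no quantity of order $m_{\min}\delta$ appears there, and the relevant eigenvalue is increasing, not decreasing, in $\lambda$. The \emph{upper} endpoint $\lambda\le m_{\min}(\delta-\cdots)$ comes instead from nonnegativity of the off-diagonal blocks of $Q$ (the paper's $\Gamma$), because the choice of $\phi_k$ injects $-\beta/(2m_k)-\beta/(2m_\ell)$ into every cross-cluster entry. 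With these two corrections your outline coincides with the paper's proof; the remaining probabilistic ingredients (the Lei--Rinaldo bound for $\|A-P\|$ and Chernoff plus union bounds for the degree comparisons) are as you describe.
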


\begin{remark}
The above theorem controls how fast the different parameters can grow or decay as $n$ grows. For ease of exposition, we will discuss these constraints on each parameter by fixing the others. The number of clusters  $r$ can increase with $n$. In the dense setting, when $B_{kk}=\Theta(1)$,  
$\mmin=\omega(\sqrt{n})$ and $r=o(\sqrt{n})$, which matches with the best upper bound on $r$ from existing literature. 
Finally when $\max_k B_{kk} = \Theta(\log n/n)$, we note that $m_{\min}=\tilde{\Theta}(n)$ and $r=\tilde{\Theta}(1)$. 
\end{remark}
We can see from the condition in Theorem \ref{th:unspecified} that the tuning parameter should be of the order $\sqrt{d}$ where $d$ is the average degree.
In fact, as shown in the following theorem, when the $\lambda$ is greater than the operator norm, \eqref{eq:sdp-lambda} returns a degenerating rank-1 solution. This gives an upper bound for $\lambda$.

\begin{proposition}
When $\lambda \ge \|A\|_{op}$, then the solution for \eqref{eq:sdp-lambda} is $\bfone\bfone^T/n$.
\label{prop:lambda_upper}
\end{proposition}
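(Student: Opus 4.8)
The plan is to exhibit $\bfone\bfone^T/n$ as the maximizer by a direct comparison argument, using only the constraints $X\succeq 0$ and $X\bfone=\bfone$ (the elementwise constraint $X\ge 0$ is not needed, and $\bfone\bfone^T/n$ clearly satisfies it, so restricting to the smaller feasible set only helps). First I would check feasibility: $\bfone\bfone^T/n\succeq 0$, it is nonnegative, and $(\bfone\bfone^T/n)\bfone=\bfone$ since $\bfone^T\bfone=n$; its objective value is $\bfone^TA\bfone/n-\lambda$.

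Next, for an arbitrary feasible $X$, I would write $X=\bfone\bfone^T/n+Y$ with $Y:=X-\bfone\bfone^T/n$ symmetric. The two structural facts I need about $Y$ are: (i) $Y\bfone=X\bfone-\bfone=0$, so $\bfone$ lies in the kernel of $Y$; and (ii) $Y\succeq 0$. For (ii), note that on the subspace $\bfone^\perp$ the rank-one term $\bfone\bfone^T/n$ annihilates, so $v^TYv=v^TXv\ge 0$ for every $v\perp\bfone$; combined with $Y\bfone=0$, the matrix $Y$ is block-diagonal in the orthogonal decomposition $\mathrm{span}(\bfone)\oplus\bfone^\perp$ with a zero block and a PSD block, hence PSD overall.

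Then I would expand the objective along this decomposition, using $\tr(A\bfone\bfone^T/n)=\bfone^TA\bfone/n$ and $\tr(\bfone\bfone^T/n)=1$:
\[
\tr(AX)-\lambda\tr(X)=\left(\frac{\bfone^TA\bfone}{n}-\lambda\right)+\tr\left((A-\lambda I)Y\right).
\]
Since $A$ is symmetric, $\lambda\ge\|A\|_{op}$ implies $\lambda\ge\lambda_{\max}(A)$, i.e.\ $\lambda I-A\succeq 0$; together with $Y\succeq 0$ this gives $\tr((A-\lambda I)Y)=-\tr\left((\lambda I-A)Y\right)\le 0$. Hence every feasible $X$ has objective value at most $\bfone^TA\bfone/n-\lambda$, attained at $Y=0$, i.e.\ at $\bfone\bfone^T/n$. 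When $\lambda>\|A\|_{op}$ we have $\lambda I-A\succ 0$, so $\tr((\lambda I-A)Y)=0$ forces $Y=0$ and the maximizer is unique.

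The argument is short; the only mildly delicate point is establishing $Y\succeq 0$ from the single semidefinite inequality $X\succeq 0$ plus the affine constraint $X\bfone=\bfone$, and being careful that for symmetric $A$ the operator norm bounds $\lambda_{\max}(A)$. I do not expect a real obstacle beyond the boundary case $\lambda=\|A\|_{op}$, where strict uniqueness could fail in principle (though not when the leading eigenvector of $A$ is incompatible with the constraints); it suffices for the proposition to state that $\bfone\bfone^T/n$ is an optimal solution.
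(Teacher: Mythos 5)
Your proof is correct and takes essentially the same route as the paper's: the paper writes the eigendecomposition $X=\frac{1}{n}\bfone\bfone^T+\sum_{i\ge2}s_iu_iu_i^T$ with $s_i\ge0$ and notes each residual term contributes $s_iu_i^T(A-\lambda I)u_i\le0$, which is precisely your decomposition $X=\bfone\bfone^T/n+Y$ with $Y\succeq0$, $Y\bfone=0$, and $\tr((A-\lambda I)Y)\le0$. Your caveat about possible non-uniqueness at the boundary $\lambda=\|A\|_{op}$ is a fair point the paper glosses over, but it does not affect the claim that $\bfone\bfone^T/n$ is optimal.
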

The proof of Proposition~\ref{prop:lambda_upper} is to be found in Appendix~\ref{app:lambda_upper}.
Recall the properties of the ground truth clustering matrix defined in Eq.~\eqref{eq:x0}.
If the optimal solution recovers the ground truth $X_0$ exactly, we can estimate $r$ easily from its trace. Therefore we have the following corollary.

\begin{corollary}
Let $\hat{X}$ be the optimal solution of \eqref{eq:sdp-lambda} with $A\sim \text{SBM}(B,Z)$, where $B\in [0,1]^{r\times r}$. Under the condition in Theorem \ref{th:unspecified}, $\tr(\hat{X})=r$ with probability at least $1-n^{-1}$.
\label{cor:estk}
\end{corollary}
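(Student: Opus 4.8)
The plan is to obtain Corollary~\ref{cor:estk} immediately from Theorem~\ref{th:unspecified}, since once exact recovery holds the estimate $\tr(\hat X)$ is a deterministic function of $X_0$.

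First I would invoke Theorem~\ref{th:unspecified}: under its hypotheses (here $\lambda$ lies in the prescribed interval and the separation condition~\eqref{eq:separation} holds, and $B\in[0,1]^{r\times r}$ so that $A\sim\text{SBM}(B,Z)$ is a well-defined Bernoulli graph), there is an event $\mathcal{E}$ with $\bP(\mathcal{E})\ge 1-n^{-1}$ on which the optimizer of~\eqref{eq:sdp-lambda} satisfies $\hat X = X_0 = Z\diag(\bmv)^{-1}Z^T$.

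Next, on $\mathcal{E}$ I would compute the trace of $X_0$ using cyclicity of the trace together with the identity $Z^TZ=\diag(\bmv)$ from the model~\eqref{eq:data-model}:
\[
\tr(\hat X)=\tr\!\big(Z\diag(\bmv)^{-1}Z^T\big)=\tr\!\big(\diag(\bmv)^{-1}Z^TZ\big)=\tr(I_r)=r .
\]
Since this holds on an event of probability at least $1-n^{-1}$, the claim follows. (This is the same observation already recorded after~\eqref{eq:x0}, namely that $\tr(X_0)$ equals the number of clusters.)

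There is no real obstacle at the level of the corollary; all the difficulty is contained in Theorem~\ref{th:unspecified} — constructing the KKT/dual multipliers for~\eqref{eq:sdp-lambda} and establishing the concentration bounds on $A-\bE[A]$ that certify $X_0$ as the unique optimum with high probability. The one point worth checking is that the prescribed $\lambda$-window is nonempty, which is exactly what~\eqref{eq:separation} guarantees; given any $\lambda$ in that window, the computation above is immediate.
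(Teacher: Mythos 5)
Your proposal is correct and matches the paper's (implicit) argument exactly: the paper derives the corollary by noting that Theorem~\ref{th:unspecified} gives $\hat X = X_0$ with probability at least $1-n^{-1}$ and that $\tr(X_0)=r$, which is precisely the cyclicity computation $\tr(Z\diag(\bmv)^{-1}Z^T)=\tr(I_r)=r$ you carry out. Nothing further is needed.
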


In particular, when $r$ is known, we have the following exact recovery guarantee, which is stronger than the weak consistency result in~\cite{yan2016convex}.

\begin{theorem}
Let $A\sim \text{SBM}(B,Z)$, where $B\in [0,1]^{r\times r}$. $X_0$ is the optimal solution of \eqref{eq:sdp-knownk}
with probability at least $1-n^{-1}$, if the separation condition Eq.~\eqref{eq:separation} holds true.
\label{th:exact}
\end{theorem}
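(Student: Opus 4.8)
The plan is to deduce Theorem~\ref{th:exact} directly from Theorem~\ref{th:unspecified}, using the observation that on the feasible set of \eqref{eq:sdp-knownk} the penalty $\tr(X)$ is frozen at the constant $r$, so \eqref{eq:sdp-knownk} and \eqref{eq:sdp-lambda} share the same maximizer there. The first step is to check that the hypotheses of Theorem~\ref{th:exact} make the admissible interval for $\lambda$ in Theorem~\ref{th:unspecified} non-empty. Writing $p_{\max}=\max_{k,\ell}B_{k\ell}$, the elementary bounds $\mmin\le m_k\le n$, $\log m_k\le\log n$ and $B_{kk}\le p_{\max}$ give $\max_k\sqrt{m_kB_{kk}}\le\sqrt{n\,p_{\max}}$, so the lower endpoint of the interval is at most $(c_1+c_2)\sqrt{n\,p_{\max}}$; on the other hand $\max_{k,\ell}\sqrt{B_{k\ell}\log m_k/m_k}$ is at most the sum of the first two terms on the right-hand side of \eqref{eq:separation} (bound $\log m_k\le\log n$, $m_k\ge\mmin$), so the factor $\delta-\max_{k,\ell}\sqrt{B_{k\ell}\log m_k/m_k}$ in the upper endpoint is at least the third term $c\sqrt{n\,p_{\max}}/\mmin$, making the upper endpoint at least $c\sqrt{n\,p_{\max}}$. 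Hence, provided the universal constant $c$ in \eqref{eq:separation} is taken $\ge c_1+c_2$ (which we may do, enlarging it if necessary), the interval contains some value $\lambda^\star$.

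Fix such a $\lambda^\star$. The matrix $X_0=Z\diag(\bmv)^{-1}Z^T$ of \eqref{eq:x0} is feasible for \eqref{eq:sdp-knownk}: it is PSD, entrywise nonnegative, and satisfies $X_0\bfone=\bfone$ and $\tr(X_0)=r$ (all noted in Section~\ref{sec:setup}). Apply Theorem~\ref{th:unspecified} with $\lambda=\lambda^\star$: on an event $\mathcal{E}$ of probability at least $1-n^{-1}$, $X_0$ is the optimal solution of \eqref{eq:sdp-lambda}, and — as is standard for such dual-certificate arguments — the \emph{unique} one. Now work on $\mathcal{E}$ and let $X$ be any point feasible for \eqref{eq:sdp-knownk}. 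Then $X$ is feasible for \eqref{eq:sdp-lambda} as well and $\tr(X)=r=\tr(X_0)$, so $\innerprod{A}{X}-\lambda^\star\tr(X)\le\innerprod{A}{X_0}-\lambda^\star\tr(X_0)$ with equality only at $X=X_0$; cancelling the now-constant penalty $\lambda^\star r$ yields $\innerprod{A}{X}\le\innerprod{A}{X_0}$, strictly unless $X=X_0$. Therefore $X_0$ is the unique optimal solution of \eqref{eq:sdp-knownk}, as claimed.

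The only nontrivial ingredient is the interval-nonemptiness check of the first paragraph, and it is there that \eqref{eq:separation} carries a double load: the same margin $\delta$ that lets $A$ concentrate around $\bE A$ well enough for the dual certificate of Theorem~\ref{th:unspecified} to exist must also exceed $\Theta(\sqrt{n\,p_{\max}}/\mmin)$ so that there is room between the ``$\lambda$ large enough to kill the sampling noise'' lower bound and the ``$\lambda$ small enough not to over-penalize'' upper bound. (A self-contained alternative would be to re-run the dual-certificate construction behind Theorem~\ref{th:unspecified} with the additional equality constraint $\tr(X)=r$; its scalar Lagrange multiplier plays exactly the role of $\lambda$, so the certificate conditions are identical and nothing new is gained — which is precisely why the reduction above loses nothing.)
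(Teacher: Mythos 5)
Your argument is correct, but it runs in the opposite direction from the paper's. The paper proves Theorem~\ref{th:exact} \emph{first} and \emph{directly}: it constructs the KKT multipliers $(\Lambda,\Gamma,\alpha,\beta)$ of \eqref{eq:sdp-knownk} explicitly (Eqs.~\eqref{eq:pd-lambda-kl}--\eqref{eq:pd-phi}), verifies $\Lambda X_0=0$, $\Lambda\succeq 0$ via Lemma~\ref{lem:operator_conc} with $\beta=\Omega(\sqrt{np_{\max}})$, and $\Gamma\ge 0$ via Chernoff and union bounds, which is exactly where \eqref{eq:separation} enters; it then obtains Theorem~\ref{th:unspecified} as a near-corollary by identifying the trace-constraint multiplier $\beta$ with the penalty $\lambda$. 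You instead take Theorem~\ref{th:unspecified} as given, verify that \eqref{eq:separation} makes its admissible $\lambda$-interval non-empty (your endpoint bounds are right, and absorbing $c_1+c_2$ into the unspecified constant $c$ is legitimate), and observe that on the feasible set of \eqref{eq:sdp-knownk} the penalty $\lambda\tr(X)$ is constant, so the two programs share the maximizer $X_0$. This reduction is clean and buys an economical derivation, but it cannot be spliced into the paper as written without creating a circularity: the paper's proof of Theorem~\ref{th:unspecified} explicitly says its dual certificate "is guaranteed by the proof of Theorem~\ref{th:exact}," so one of the two theorems must still be proved from scratch via the certificate construction. You flag exactly this in your closing parenthetical, so the gap is one of logical ordering within the paper rather than of mathematics; the only substantive content your route adds beyond the paper's is the explicit check that the $\lambda$-interval is non-empty under \eqref{eq:separation}, which the paper leaves implicit.
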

We can see that the two SDPs \eqref{eq:sdp-lambda} and \eqref{eq:sdp-knownk} are closely related. In fact, the Lagrangian function of \eqref{eq:sdp-knownk} is same as the Lagrangian function of \eqref{eq:sdp-lambda} if we take the lagrangian multiplier for the constraint $\tr(X)=r$ as $\lambda$. We use this fact in the proof of Theorem~\ref{th:unspecified}. Both proofs rely on constructing a dual certificate witness, which we elaborate in the following subsection.
\subsection{Dual Certificate Witness}
\label{sec:dual}

\begin{figure*}[t]
\begin{tabular}{ccccc}
\includegraphics[width=.2\textwidth]{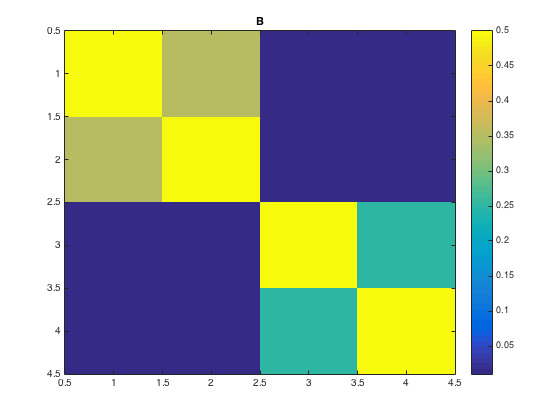} \hspace{-2em}
&\includegraphics[width=.2\textwidth]{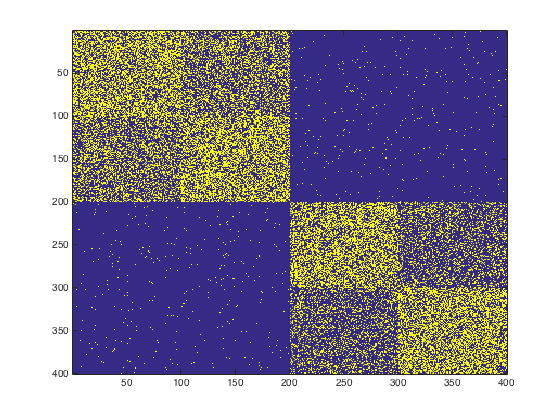} \hspace{-2em}
&\includegraphics[width=.2\textwidth]{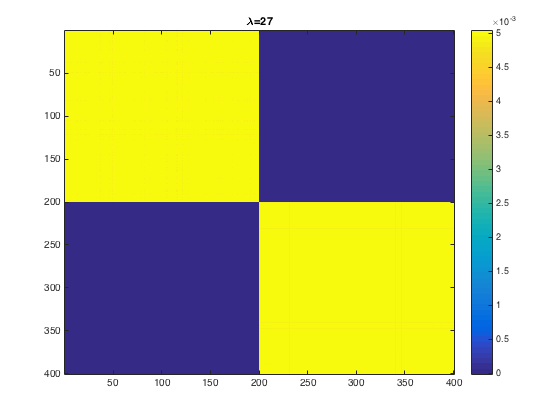} \hspace{-2em}
&\includegraphics[width=.2\textwidth]{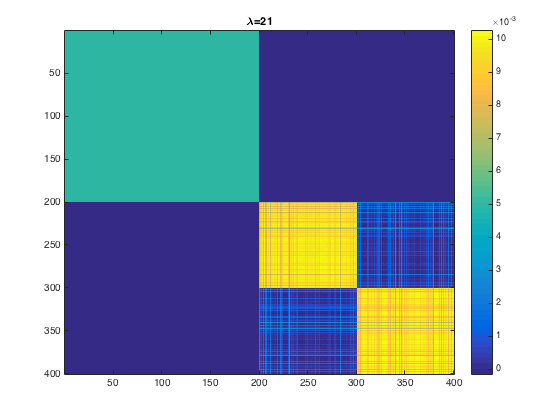} \hspace{-2em}
&\includegraphics[width=.2\textwidth]{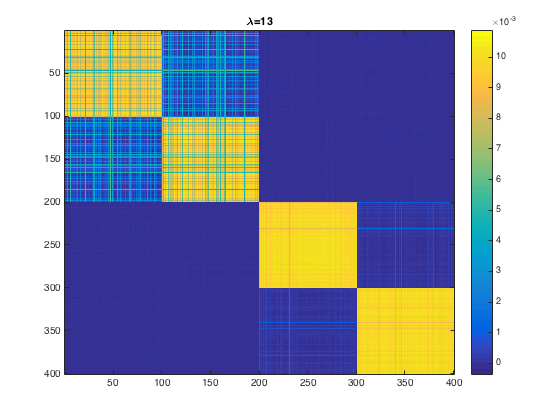} \\
(a) $B$ &(b) Adjacency &(c) $\lambda = 27$ &(d) $\lambda = 21$ &(e) $\lambda = 13$.
\end{tabular}
\caption{Solution matrices with various choices of $\lambda$.}
\label{fig:hier}
\end{figure*}

In this sketch we develop the sufficient conditions with a certain construction of the dual certificate which guarantees $X_0$ to be the optimal solution. We derive the main conditions and leave the technical details to the supplementary materials.
To start with, the KKT conditions of \eqref{eq:sdp-knownk} can be written as below.
\ba{
&\mbox{First Order Stationary } \nonumber \\
&\quad  -A - \Lambda + (1\alpha^T+\alpha1^T) + \beta I - \Gamma = 0 \label{eq:kkt-fo}\\
&\mbox{Primal Feasibility } \nonumber \\
&\quad  X\succeq 0,\ 0\leq X\le 1,\ X\bfone_n=\bfone_n, \ \tr(X)=\cl \label{eq:kkt-pf}\\
&\mbox{Dual Feasibility } \nonumber \\
&\quad  \Lambda \succeq 0,\ \ \Gamma\ge 0 \label{eq:kkt-df}\\
&\mbox{Complementary Slackness }\nonumber  \\
&\quad  \innerprod{\Lambda}{X} = 0,\ \ \Gamma \circ X=0 \label{eq:kkt-cs}
}
For \eqref{eq:sdp-lambda}, we replace $\beta$ by $\lambda$ and drop the trace constraint in the primal feasibility. Since we use $X_0$ as the primal construction, removing one primal feasibility condition has no impact on the other part of the proof.


Consider the following primal-dual construction.
\ba{
&X_{S_k}=E_{m_k}/m_k; \quad X_{S_kS_\ell}=0, \forall k\ne \ell \label{eq:pd-x}\\
&\Lambda_{S_k} = -A_{S_k} + (1_{m_k}\alpha_{S_k}^T+\alpha_{S_k} 1_{m_k}^T)+\beta I_{m_k},  \nonumber\\
& \Lambda_{S_kS_\ell} = -(I-\frac{E_{m_k}}{m_k})A_{S_kS_\ell}(I-\frac{E_{m_\ell}}{m_\ell}) \label{eq:pd-lambda-kl}\\
&\Gamma_{S_k} = 0, \nonumber \\
& \Gamma_{S_k,S_\ell} = -A_{S_k,S_\ell} -\Lambda_{S_k,S_\ell} +(1_{m_k}\alpha_{S_\ell}^T+\alpha_{S_k}1_{m_\ell}^T)
 \label{eq:pd-gamma}\\
&\alpha_{S_k}=\frac{1}{m_k}\left( A_{S_k}\bfone_{m_k}+\phi_k\bfone_{m_k} \right) \label{eq:pd-alpha}\\
& \phi_k = -\frac{1}{2}\left( \beta+ \frac{\bfone_{m_k}^TA_{S_k}\bfone_{m_k}}{m_k} \right) \label{eq:pd-phi}
}
The first order condition Eq.~\eqref{eq:kkt-fo} is satisfied by construction.
By Eq.~\eqref{eq:pd-alpha} and \eqref{eq:pd-phi}, it can be seen that 
\bas{
\alpha_{S_k}^T\bfone_{m_k}=\frac{1}{m_k}\left( \bfone_{m_k}^TA_{S_k}\bfone_{m_k}\right) +\phi_k = \frac{\bfone_{m_k}^TA_{S_k}\bfone_{m_k}}{2m_k}-\frac{\beta}{2}
}

In view of the fact that both $\Lambda$ and $X$ are positive semi-definite, $\innerprod{\Lambda}{X}=0$ is equivalent to $\Lambda X=0$.
Now it remains to verify: 
\bas{
(a)\ \Lambda X=0; \qquad  (b)\ \Lambda \succeq 0; \qquad (c)\ \Gamma_{uv}\ge 0, \forall u,v
}
And it can be seen that (a) holds by construction. 
\paragraph{Positive Semidefiniteness of $\Lambda$}
For (b), since span$(\bfone_{S_k})\subset \text{ker}(\Lambda)$, it suffices to show that for any $u\in \text{span}(\bfone_{S_k})^\perp$, $u^T\Lambda u \ge \epsilon \|u\|^2$. Consider the decomposition $u=\sum_k u_{S_k}$, where $u_{S_k}:=u\circ \bfone_{S_k}$, and $u_{S_k}\perp \bfone_{m_k}$.
\bas{
&u^T\Lambda u = \sum_k u_{S_k}^T\Lambda_{S_k}u_{S_k}+\sum_{k \ne \ell} u_{S_k}^T\Lambda_{S_kS_\ell}u_{S_\ell}\\
=& -\sum_k u_{S_k}^TA_{S_k}u_{S_k} + \beta \sum_k u_{S_k}^Tu_{S_k} - \sum_{k \ne \ell} u_{S_k}^T A_{S_kS_\ell} u_{S_\ell}\\
=& -\sum_k u_{S_k}^T(A-P)_{S_k} u_{S_k}   \\
&\qquad \qquad -\sum_{k \ne\ell} u_{S_k}^T (A-P)_{S_kS_\ell} u_{S_\ell} + \beta \|u\|_2^2\\
=& -u^TAu + \beta \|u\|_2^2 \ge  \epsilon \|u\|^2
} 
In order to obtain a sufficient condition on $\beta$, we will use the following lemma from Theorem 5.2 of \cite{lei2015consistency}, which provides a tight bound for the spectral norm $\|A-\bE A\|$ for stochastic block models.
\begin{lemma}[\cite{lei2015consistency} Theorem 5.2]
Let $A$ be the adjacency matrix of a random graph on $n$ nodes in which edges occur independently. Set $\bE A=P=(p_{ij})$ and assume that $n\max_{ij}p_{ij}\le d$ for $d\ge c_0\log n$ and $c_0>0$. Then, for any $r>0$ there exists a constant $C=C(r,c_0)$ such that $ \|A-P\|\le C\sqrt{d}$, with probability at least $1-n^{-r}$.
\label{lem:operator_conc}
\end{lemma}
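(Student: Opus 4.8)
The plan is to follow the $\epsilon$-net / light-and-heavy-pairs decomposition of Feige and Ofek, which is essentially the route taken in \cite{lei2015consistency}. Write $M=A-P$, a symmetric zero-diagonal matrix with mean-zero entries in $[-1,1]$. First I would discretize: there is a finite set $\mathcal{T}$ of vectors with coordinates in $n^{-1/2}\mathbb{Z}$, norm $O(1)$, and cardinality $e^{O(n)}$, such that $\|M\|\le C_0\sup_{x,y\in\mathcal{T}}|x^\top M y|$ for a universal constant $C_0$ (round each coordinate of a unit vector to the nearest lattice point). So it suffices to control $x^\top M y$ for each fixed $(x,y)\in\mathcal{T}\times\mathcal{T}$ and then union bound. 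For a fixed pair, split the index set into \emph{light} pairs $\mathcal{L}=\{(i,j):|x_iy_j|\le\sqrt d/n\}$ and \emph{heavy} pairs $\mathcal{H}$, and bound $\sum_{\mathcal{L}}x_iM_{ij}y_j$ and $\sum_{\mathcal{H}}x_iM_{ij}y_j$ separately.

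For the light part I would apply Bernstein's inequality to $\sum_{(i,j)\in\mathcal{L}}x_iM_{ij}y_j$, viewed as a sum over the independent entries $\{A_{ij}\}_{i<j}$: each contribution is $O(\sqrt d/n)$ in magnitude and the total variance is at most $\max_{ij}P_{ij}\cdot\|x\|^2\|y\|^2=O(d/n)$, so the sum exceeds $C_1\sqrt d$ with probability at most $e^{-c'(C_1)n}$, where $c'(C_1)\to\infty$ as $C_1\to\infty$. Choosing $C_1=C_1(r,c_0)$ large enough that $c'(C_1)$ beats the $e^{O(n)}$ size of $\mathcal{T}\times\mathcal{T}$ closes the union bound with failure probability $\le n^{-r}$ for $n$ large, and the finitely many remaining $n$ are absorbed into the final constant.

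The heavy part is the crux. Its expectation piece is easy: on heavy pairs $x_i^2y_j^2\ge d/n^2$, so $|\mathcal{H}|\le (n^2/d)\,\|x\|^2\|y\|^2$ and, by Cauchy--Schwarz, $\sum_{\mathcal{H}}|x_iP_{ij}y_j|\le (d/n)\sum_{\mathcal{H}}|x_iy_j|\le (d/n)\sqrt{|\mathcal{H}|}=O(\sqrt d)$. The random piece $\sum_{\mathcal{H}}x_iA_{ij}y_j$ cannot be controlled by a union bound — the deviations are too heavy-tailed — and is instead bounded \emph{deterministically} on a high-probability event capturing the combinatorial structure of the graph. Concretely I would show that with probability at least $1-n^{-r}$ the random graph simultaneously satisfies: (i) all but a controlled set of vertices have degree $O(d)$; and (ii) a discrepancy property, namely for all $S,T\subseteq[n]$ the edge count $e(S,T)$ is not much larger than its mean $(d/n)|S||T|$ unless $|S|,|T|$ are small, in which case $e(S,T)\log\!\big(e(S,T)n^2/(d|S||T|)\big)=O\big((|S|\vee|T|)\log(n/(|S|\vee|T|))\big)$; the hypothesis $d\ge c_0\log n$ is exactly what makes (i)--(ii) hold with the stated probability. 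On that event, the Feige--Ofek counting argument — partitioning $\mathcal{H}$ by dyadic ranges of $|x_i|$ and $|y_j|$ and bounding the edge mass in each block via (i)--(ii) — yields $\sum_{\mathcal{H}}x_iA_{ij}y_j=O(\sqrt d)$ for \emph{all} unit vectors $x,y$ at once.

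Combining the three bounds on the intersection of the light-pair event and the structural event (total failure probability $\le 2n^{-r}$, which we absorb by relabeling $r$) gives $|x^\top(A-P)y|=O(\sqrt d)$ for every $(x,y)\in\mathcal{T}\times\mathcal{T}$, hence $\|A-P\|\le C(r,c_0)\sqrt d$. I expect the real work to be in item (ii): proving the discrepancy inequality with the right logarithmic trade-off and threading it through the dyadic decomposition so that the heavy contribution comes out as $O(\sqrt d)$ rather than $O(\sqrt{d\log n})$. This is precisely where a naive matrix-Bernstein argument loses a $\sqrt{\log n}$ factor, and where all the care is needed.
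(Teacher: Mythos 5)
The paper does not prove this lemma; it is quoted verbatim from Theorem 5.2 of \cite{lei2015consistency}, so there is no internal proof to compare against. Your sketch follows the same Feige--Ofek light/heavy-pair decomposition that Lei and Rinaldo use to establish that theorem, and the outline (Bernstein plus a union bound over the lattice net for the light pairs, bounded degrees and a discrepancy property for the heavy pairs) is the correct route.
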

By Lemma~\ref{lem:operator_conc}, 
 a sufficient condition is to have
\ba{
	\beta =\Omega(\sqrt{n p_{\max}}) \ge \| A-P\|_2 
	\label{eq:beta_lowerbound}
}

\paragraph{Positiveness of $\Gamma$}
For (c), denote $d_i(S_k)=\sum_{j\in S_k}A_{i,j}$, which is the number of edges from node $i$ to cluster $k$, and $\bar{d}_i(S_k)=\frac{d_i(S_k)}{m_k}$. Define the average degree between two clusters as $\bar{d}(S_kS_\ell)=\frac{\sum_{i\in S_\ell}d_i(S_k)}{m_\ell}$. For $k\ne \ell$, 
$u\in C_k, v\in C_\ell$, we have
$\Gamma_{uv}\ge 0 $ equivalent to 
\ba{
&\bar{d}_u(S_k) - \bar{d}_u(S_\ell)+\frac{1}{2}\left( \bar{d}(S_kS_\ell) -\bar{d}(S_kS_k) \right) \nonumber \\
&+ \bar{d}_v(S_\ell) - \bar{d}_v(S_k)+\frac{1}{2}\left( \bar{d}(S_kS_\ell) -\bar{d}(S_\ell S_\ell) \right)\nonumber  \\
& -\frac{\beta}{2m_\ell}- \frac{\beta}{2m_k}  \ge 0
\label{eq:positive_condition_for_each_node}
}
By Chernoff bound and union bound, we have a sufficient condition of 
$\Gamma_{uv}\ge 0$ for all pairs of $(u,v)$: 
\bas{
\delta \ge 2\sqrt{6\log n}\max_k \sqrt{\frac{B_{kk}}{m_k}} + \max_{\ell\ne k}6\sqrt{\frac{B_{k\ell}\log n}{m_{\min}}}+c\frac{np_{\max}}{\mmin}
}
A complete proof could be found in Appendix \ref{sec:proof_exact}.


\section{Experiments}
\label{sec:exp}

First, we present a procedure for tuning $\lambda$ in~\eqref{eq:sdp-lambda} in subsection~\ref{sec:tuning}. Then, in subsection~\ref{sec:synth}  and~\ref{sec:real} we present results on simulated and real data.
 
\subsection{Tuning and substructure finding}
\label{sec:tuning}

\begin{algorithm}[H]
\begin{algorithmic}
\STATE {\bf Input}: graph $A$, number of candidates $T$;
\FOR{i = 0:T-1}
\STATE $\lambda = \exp(\frac{i}{T}\log(1+\|A\|_{op}))-1$;
\STATE $\hat{X}_{\lambda} =$ solution of \eqref{eq:sdp-lambda}.
\STATE $\theta(\lambda) = \frac{\sum_{i\le r_\lambda}\sigma_i(X_\lambda)}{\tr(\hat{X}_\lambda)}$;
\ENDFOR
\STATE $\hat{\lambda} = \arg\max_{\lambda} \theta(\lambda)$;
\STATE {\bf Output}: $\hat{X}_{\hat{\lambda}}$, $\hat{r}=[\tr(\hat{X}_{\hat{\lambda}})]$;
\end{algorithmic}
\caption{Semidefinite Program  with Unknown $r$ (SPUR)}
\label{alg:sdp_tuning}
\end{algorithm}

As shown in Proposition~\ref{prop:lambda_upper}, choice of $\lambda$ should not exceed the operator norm of the observed network. Therefore we do a grid search for $\lambda$ from 0 to $\|A\|_{op}$ in log scale.
For each candidate $\lambda$, we solve \eqref{eq:sdp-lambda} and get the corresponding solution $\hat{X}_\lambda$.
The estimated number of clusters is defined as $r_\lambda = [\tr(\hat{X}_\lambda)]$, where $[\cdot]$ represent the rounding operator. Let $\sigma_i(X)$ be the $i$-th eigenvalue of $X$. We then pick the solution which maximizes the proportion of leading eigenvalues $\hat{\lambda} = \arg\max_\lambda \sum_{i\le r_\lambda}\sigma_i(\hat{X}_\lambda)/\tr(\hat{X}_\lambda)$. This fraction calculates the proportion of leading eigenvalues in the entire spectrum. If it equals to one, then the solution is low rank. The algorithm is summarized in Algorithm~\ref{alg:sdp_tuning}. 
In the experiments, for scalability concerns we fix a smaller range and search over the range $0.1\sqrt{\bar{d}}$ to $2\sqrt{\bar{d}}$, where $\bar{d}$ denotes the average degree. 

In theory, when $\lambda$ lies in the interval specified by Corollary~\ref{cor:estk} exact recovery is possible. Yet, in practice, solutions with different choices of $\lambda$, even outside of the theoretical range, still gives us some useful information about the sub-structures of the network. Figure~\ref{fig:hier} shows a probability matrix which has large separation into two big clusters and each further splits into two smaller clusters with different separations. With a larger $\lambda$ it returns an under estimated $r$, but consistent to the hierarchical structure in the original network. In this vein, the tuning method provides a great way to do exploratory analysis of the network.

\begin{figure}[h]
\centering
\begin{tabular}{cc}
\hspace{-2em}   \raisebox{.1\height}{\includegraphics[width=.24\textwidth, height= 3.4 cm]{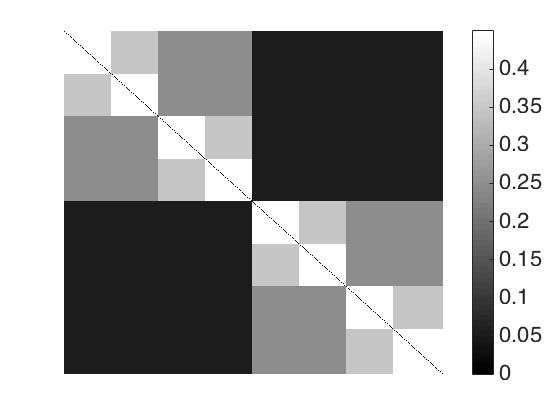} } 
&\hspace{-2.5em} \includegraphics[width=.24\textwidth, height= 3.6 cm]{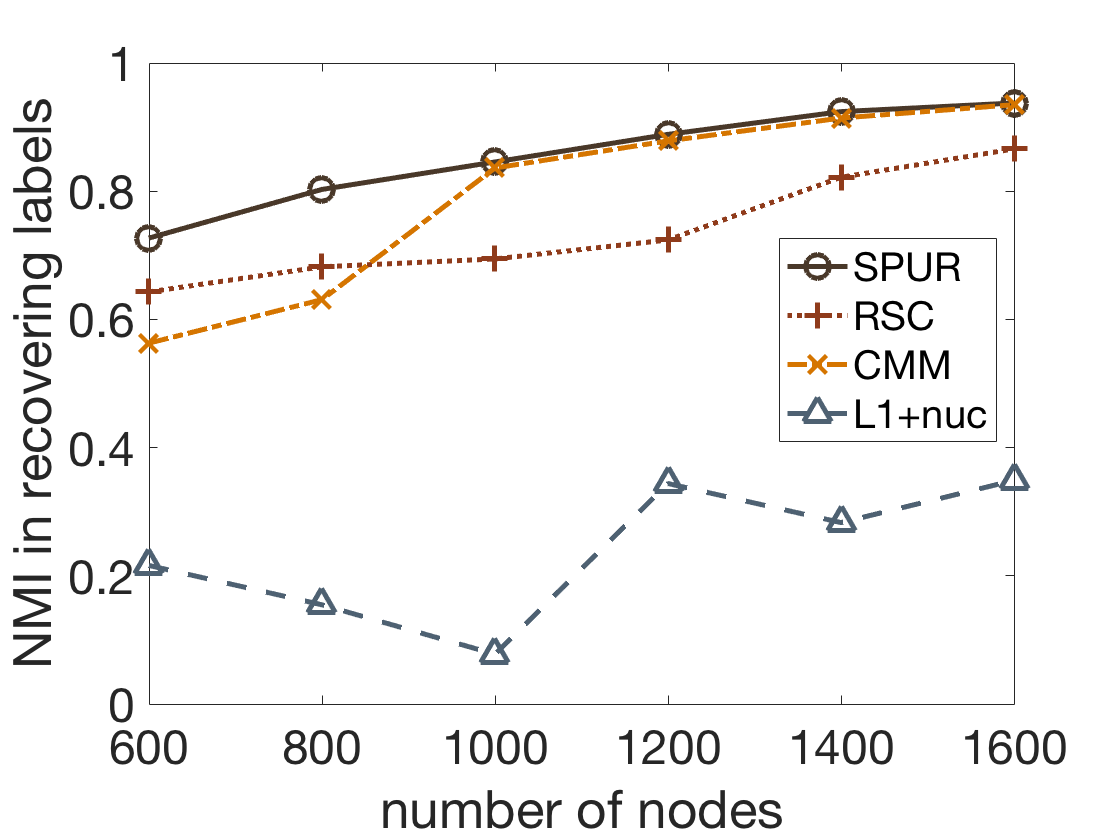} \\
(a) Expectation of network: & (b) NMI;\\
\end{tabular}
\caption{The expectation matrix and NMI used for the known $r$ setting. }
\label{fig:knownk}
\end{figure}

\subsection{Synthetic data}
\label{sec:synth}
We present our simulation results in three parts - known $r$, increasing $r$ and unknown $r$. We report the normalized mutual information (NMI) of predicted label and ground truth membership, and the accuracy of estimating $r$. For each experiment, the average over 10 replicates is reported.

\paragraph{Known number of clusters}
We compare the NMI of \sdp against some state-of-the-art methods, including Regularized Spectral Clustering (RSC) \cite{amini2013pseudo}, and two convex relaxations which do not require $r$ as input to the optimization: convexified modularity maximization (CMM) in \cite{chen2015convexified}; and the $\ell_1$ plus nuclear norm penalty method proposed in \cite{chen2014improved} (\lnuc). In this setting, we use \eqref{eq:sdp-knownk} directly which does not involve any tuning. In contracst, due to the hierarchical structure of the network, the default values for the tuning parameters in both methods would only be able to recover the lowest level of hierarchy, which consists of two clusters. Hence for a fair comparison, we try a grid search for those tuning parameters and choose the one that gives largest eigengap between the $r$-th and $r+1$th eigenvalues of the clustering matrices.
The expectation of the network generated is shown in the left panel of 
Figure~\ref{fig:knownk}. The right panel shows that the proposed method outperforms the competing methods. 

\begin{figure}[h]
\centering
\includegraphics[width=.28\textwidth]{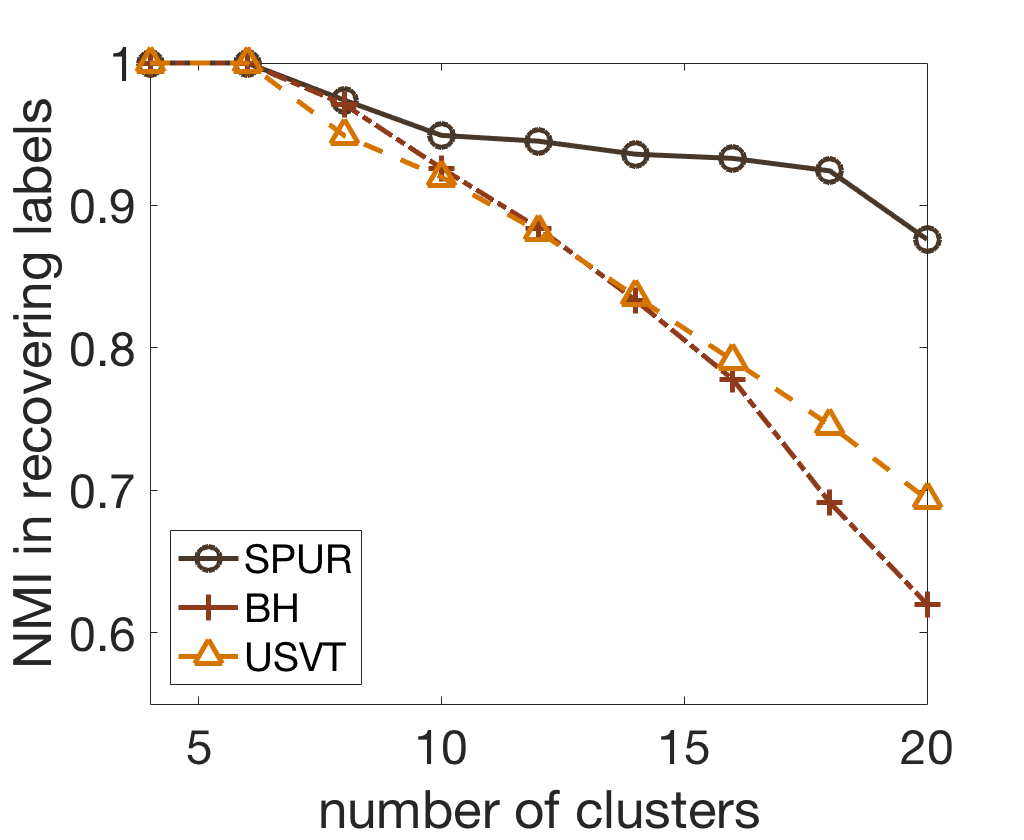} 
\caption{NMI under planted partition model with increasing (unknown) number of clusters.}
\label{fig:increasek}
\end{figure}

\begin{figure*}[t]
\centering
\begin{tabular}{crc}
\raisebox{.1\height}{\hspace{-2em}\includegraphics[width=.3\textwidth, height= 3.85 cm]{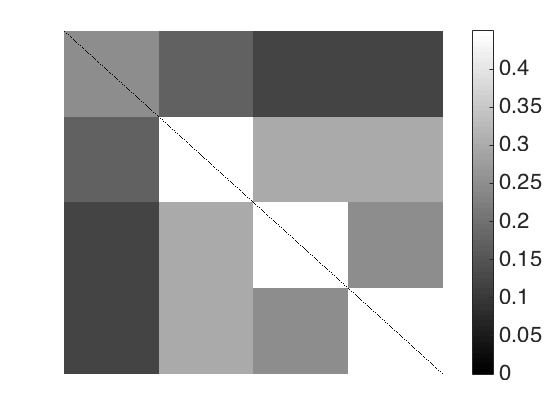}  \hspace{-1em}} 
& \multicolumn{2}{c}{\hspace{-3em} \includegraphics[width=.6\textwidth, height= 4.1 cm]{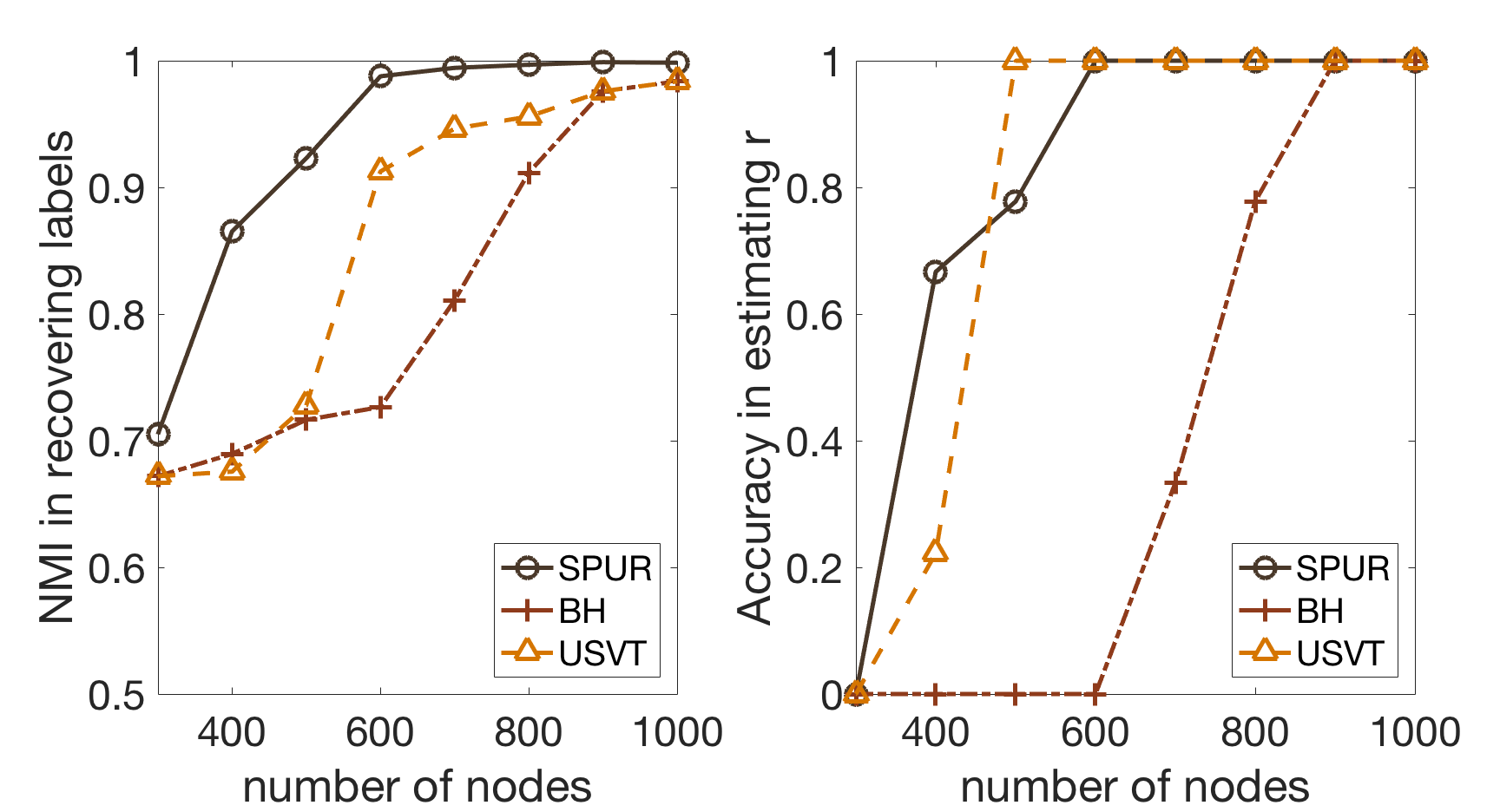}} \\
(a) balanced setting with unknown $r$: & \mbox{               }NMI & Accuracy of $r$.\\
  \raisebox{.1\height}{\hspace{-2em}\includegraphics[width=.3\textwidth, height= 3.85 cm]{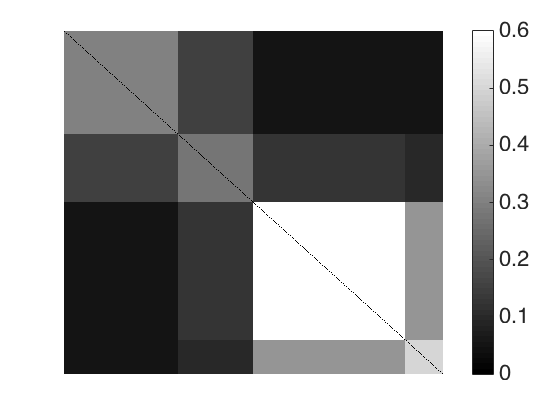} \hspace{-1em}} 
& \multicolumn{2}{c}{\hspace{-3em} \includegraphics[width=.6\textwidth, height= 4.1 cm]{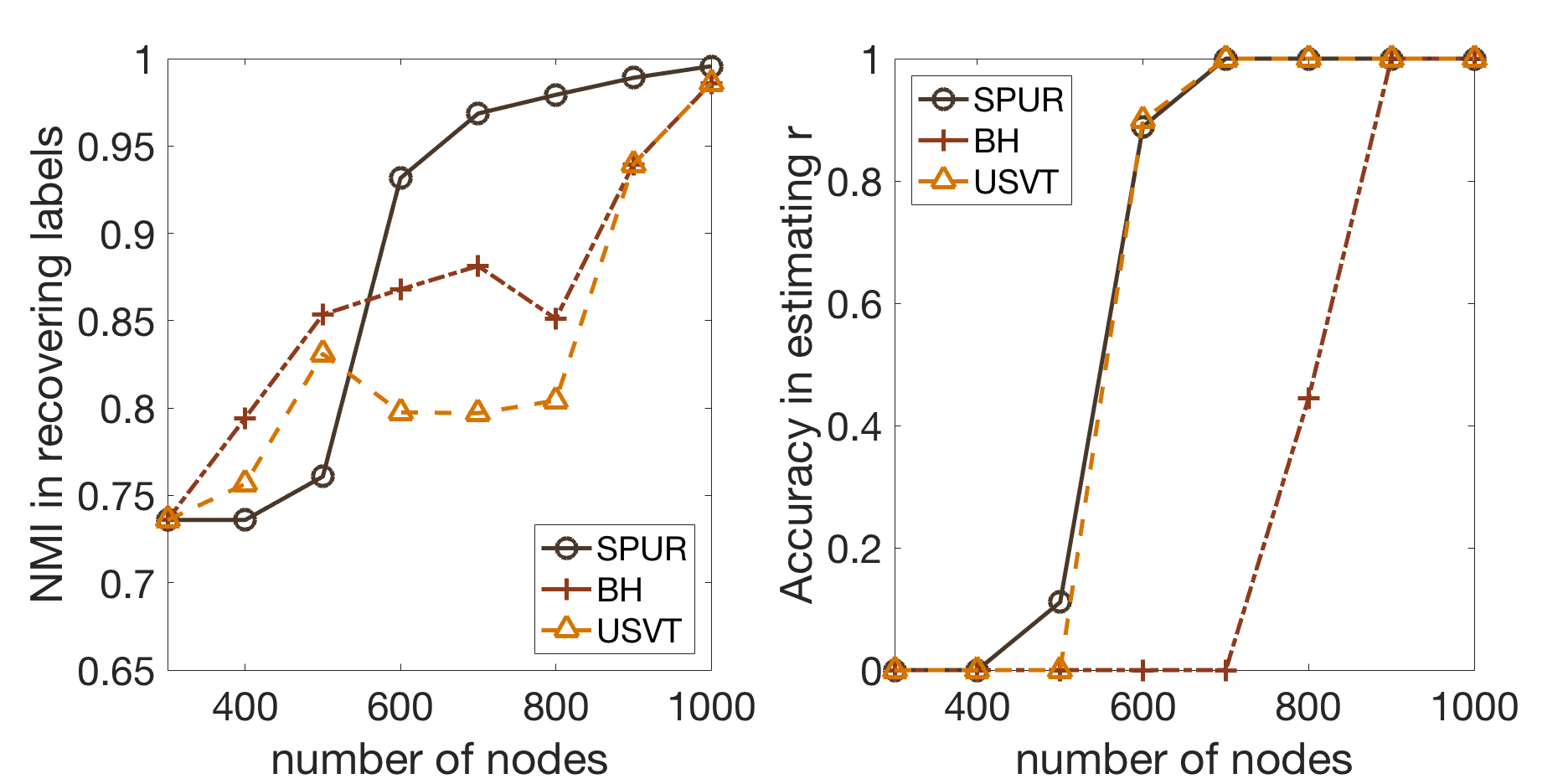} \hspace{-1em}} \\
(b) unbalanced setting with unknown $r$: & \mbox{               }NMI & Accuracy of $r$.
\end{tabular}
\caption{
The first row shows weakly assortative models with balanced cluster sizes and the corresponding NMI and accuracy in estimating $r$; the second row shows those for unbalanced cluster sizes.
}
\label{fig:weak_imb}
\end{figure*}

\begin{figure}[H]
\centering
\includegraphics[width=.18\textwidth, height=2.3cm]{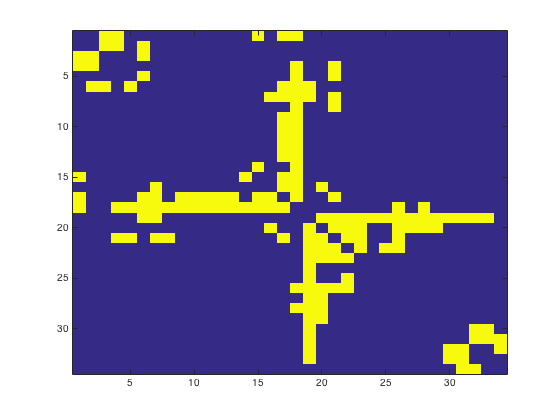} 
 \includegraphics[width=.28\textwidth, height=2.8cm]{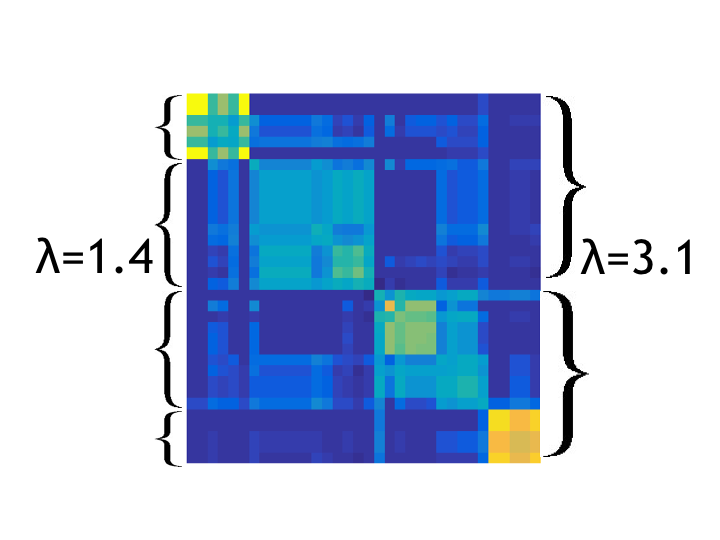} 
\caption{Adjacency matrix and predicted $X$ for karate club dataset; ordered by predicted labels. }
\label{fig:real_recover}
\end{figure}

\paragraph{Increasing number of clusters}
In this experiment, we fix the number of nodes as 400 and increase the number of clusters from 4 to 20. With each given $r$ we generate the graph with $B_{kk}=0.6, B_{k\ell}=0.1, \forall k\ne\ell$ and $\mmax/\mmin=4$, then run the various estimation algorithms same as in previous experiment. It is shown in \ref{fig:increasek} that as number of clusters increases, all methods deteriorate, but the performance for SPUR declines slower than the others.

\paragraph{Unknown number of clusters}
In this experiment, we carry out two synthetic experiments for weakly assortative graphs for both balanced and unbalanced cluster sizes. 
We generate the network with expectation matrices shown in the leftmost column of Figure~\ref{fig:weak_imb}, and show the NMI of predicted labels with ground truth labels, and the fraction of returning the correct $r$, for both balanced (Figure~\ref{fig:weak_imb}-(a)) and unbalanced  (Figure~\ref{fig:weak_imb}-(b)) settings.
We run \sdp 
and compare the result with 1) the Bethe-Hessian estimator (BH) in \cite{le2015estimating}, in particular BHac (which has been shown to perform better for unbalanced settings), 2) USVT in \cite{chatterjee2015matrix}.
For all competing methods, we run spectral clustering with the estimated $r$ to estimate the cluster memberships.
As we can see here, \sdp has a better accuracy in label recovery than competing methods. \sdp also achieves accurate cluster number faster than competing methods. 

\setlength{\tabcolsep}{2pt}
\begin{table}[h]
\caption{Estimated number of clusters for real networks.} \label{tab:real}
\begin{center}
\begin{tabular}[b]{c|cccccc}
		\toprule
		Datasets & Truth & SDP & BH & USVT  & CMM \\
		\midrule
		College Football & 12 & 13 & 10   & 10  & 10\\
		Political Books & 3 & 3 & 4 &  4  & 2 \\
		Political Blogs &2 & 3 & 8 &  3  &2\\
		Dolphins & 2 & 5 & 2 & 4  & 7 \\
		Karate  &2 & 2 &2& 2  &2\\
		\bottomrule
	\end{tabular}
\end{center}
\end{table}
\subsection{Real Datasets}
\label{sec:real}
We apply the proposed method on several real world data sets\footnote{All datasets used here are available at \url{http://www-personal.umich.edu/~mejn/netdata/}.}: the college football dataset \cite{girvan2002community}, the political books, political blogs \cite{adamic2005political}, dolphins and karate club \cite{zachary1977information} datasets.
We compare the performance of \sdp with BH, CMM and USVT in Table \ref{tab:real}. As seen from~\cite{le2015estimating}, most algorithm correctly finds $r$ for about 2 or 3 of these networks. It is also worth pointing out that this typically happens because  different techniques finds different clusterings of the hidden substructures~\cite{bickel2016hypothesis}. We will now show one such substructure we found in the Karate club data.


Figure \ref{fig:real_recover} shows the adjacency matrix and $\xhat$ for the Karate club data set. For $\lambda=3.1$, we find two clusters, whereas for $\lambda=1.4$, we find 4 clusters, which are further subdivisions of the first level. While our tuning method picks up $\lambda=3.1$ ($r=2$) based on the scoring, we show the substructure for $\lambda=1.4,r=4$ in Figure \ref{fig:real_recover}. 
The left panel shows the adjacency matrix of the Karate club data ordered according to the clusters obtained with $\lambda=1.4$.  The right panel of Figure \ref{fig:real_recover} shows finer substructure of $\xhat$; as suggested by the adjacency matrix, within each group there are two small clique like groups at the two corners, and the hubs from each group.

\section{Conclusion}
We present SPUR, a SDP-based algorithm which provably learns the number of clusters $r$ in a SBM under the weakly assortative setting. Our approach does not require the knowledge of model parameters, and foregoes the added tuning step used by existing SDP approaches for unequal size clusters when $r$ is known. For unknown $r$, the tuning in the objective provides guidance in exploring the finer sub-structure in the network. Simulated and real data experiments show that SPUR performs comparably or better than state-of-the-art approaches.

\subsubsection*{Acknowledgements}
PS was partially supported by NSF grant DMS 1713082.
\bibliographystyle{plain}
\bibliography{bib_exact}

\clearpage
\onecolumn
\appendix

\section{Proof of Theorem \ref{th:unspecified} and \ref{th:exact}}
\label{sec:proof_exact}

\begin{proof}[Proof of Theorem \ref{th:exact}]
The construction \eqref{eq:pd-lambda-kl}-\eqref{eq:pd-phi} together with $X_0$ is a primal-dual certificate, if  \eqref{eq:kkt-fo}-\eqref{eq:kkt-cs} are satisfied. In view of the fact that both $\Lambda$ and $X$ are positive semi-definite, $\innerprod{\Lambda}{X}=0$ is equivalent to $\lambda X=0$. We need to check the following:
\begin{itemize}
\item[(a)] $\Lambda X=0$;
\item[(b)] $\Lambda \succeq 0$;
\item[(c)] $\Gamma_{uv}\ge 0, \forall u,v$.
\end{itemize}

Note that span$(X)$=span$(\bfone_{S_k})$, therefore we only need to show $\Lambda \bfone_{S_k}=0, \forall k\in[r]$. Or equivalently $\Lambda_{S_k} \bfone_{m_k}=0$ and $\Lambda_{S_kS_\ell} \bfone_{m_\ell}=0$. The latter holds by \eqref{eq:pd-lambda-kl}. For the former, recall that $\alpha_{S_k}^T\bfone_{m_k}=\frac{1}{m_k}\left( \bfone_{m_k}^TA_{S_k}\bfone_{m_k}\right) +\phi_k$.
\bas{
0=& \Lambda_{S_k} \bfone_{m_k} = -A_{S_k}\bfone_{m_k} + (\bfone_{m_k}\alpha_{S_k}^T\bfone_{m_k}+\alpha_{S_k} \bfone_{m_k}^T\bfone_{m_k}) + \beta \bfone_{m_k}\\
=& -A_{S_k}\bfone_{m_k} + \left( \frac{\bfone_{m_k}^TA_{S_k}\bfone_{m_k}}{m_k} + \phi_k \right) \bfone_{m_k} + A_{S_k}\bfone_{m_k} + \phi_k\bfone_{m_k} + \beta \bfone_{m_k}\\
=& \left( \frac{\bfone_{m_k}^TA_{S_k}\bfone_{m_k}}{m_k}  \right) \bfone_{m_k} + 2\phi_k\bfone_{m_k} + \beta \bfone_{m_k}
}
The equation holds by taking 
\ba{
\phi_k=-\frac{1}{2}\left( \beta+ \frac{\bfone_{m_k}^TA_{S_k}\bfone_{m_k}}{m_k} \right).
\label{eq:choice_of_phik}
}

\paragraph{Positive Semidefiniteness of $\Lambda$}
For (b), since span$(\bfone_{S_k})\subset \text{ker}(\Lambda)$, it suffices to show that for any $u\in \text{span}(\bfone_{S_k})^\perp$, $u^T\Lambda u \ge \epsilon \|u\|^2$. Consider the decomposition $u=\sum_k u_{S_k}$, where $u_{S_k}:=u\circ \bfone_{S_k}$, and $u_{S_k}\perp \bfone_{m_k}$.
\bas{
u^T\Lambda u =& \sum_k u_{S_k}^T\Lambda_{S_k}u_{S_k}+\sum_{k \ne \ell} u_{S_k}^T\Lambda_{S_kS_\ell}u_{S_\ell}\\
=& -\sum_k u_{S_k}^TA_{S_k}u_{S_k} + \beta \sum_k u_{S_k}^Tu_{S_k} - \sum_{k \ne \ell} u_{S_k}^T A_{S_kS_\ell} u_{S_\ell}\\
=& -\sum_k u_{S_k}^T(A-P)_{S_k} u_{S_k}  - \sum_{k \ne\ell} u_{S_k}^T (A-P)_{S_kS_\ell} u_{S_\ell} + \beta \|u\|_2^2\\
=& -u^TAu + \beta \|u\|_2^2 \ge  \epsilon \|u\|^2
} 

In order to have $\beta \ge \| A-P\|_2 $, using Lemma~\ref{lem:operator_conc}, we propose the following  sufficient condition: 
\ba{
\beta =\Omega(\sqrt{n p_{\max}})\ge \| A-P\|_2 
\label{eq:beta_lowerbound_app}
}

\paragraph{Positiveness of $\Gamma$}
For (c), denote $d_i(S_k)=\sum_{j\in S_k}A_{i,j}$, which is the number of edges from node $i$ to cluster $k$, and $\bar{d}_i(S_k)=\frac{d_i(S_k)}{m_k}$. Define the average degree between two clusters as $\bar{d}(S_kS_\ell)=\frac{\sum_{i\in S_\ell}d_i(S_k)}{m_\ell}$. For $k\ne \ell$, we plug \eqref{eq:pd-alpha} into \eqref{eq:pd-lambda-kl} and get 
\beq{
\bsplt{
\Gamma_{S_kS_\ell} =& -A_{S_kS_\ell} + (I-\frac{1}{m_k}E_{m_k})A_{S_kS_\ell}(I-\frac{1}{m_\ell}E_{m_\ell}) + \frac{1}{m_k}\left( A_{S_k}\bfone_{m_k}+\phi_k\bfone_{m_k} \right) \bfone_{m_\ell}^T \\
&\qquad \qquad \qquad+ \bfone_{m_k}\frac{1}{m_\ell}\left( \bfone_{m_\ell}^TA_{S_\ell} +\phi_\ell \bfone_{m_\ell}^T \right) \\
=& -\frac{1}{m_k}E_{m_k}A_{S_kS_\ell} - \frac{1}{m_\ell}A_{S_kS_\ell}E_{m_\ell} + \frac{1}{m_km_\ell}E_{m_k}A_{S_kS_\ell}E_{m_\ell} +\\
&\qquad \qquad \left( \frac{E_{S_kS_\ell}A_{S_\ell}}{m_\ell} +\frac{A_{S_k}E_{S_kS_\ell}}{m_k} \right) + \left( \frac{\phi_k}{m_k}+\frac{\phi_\ell}{m_\ell} \right)E_{m_k,m_\ell}
}
\label{eq:gamma_flat}
}
Therefore for $u\in C_k, v\in C_\ell$, we have
\beq{
\bsplt{
& \Gamma_{uv}= -\bar{d}_v(S_k) - \bar{d}_u(S_\ell) + \bar{d}(S_kS_\ell) + \bar{d}_v(S_\ell) + \bar{d}_u(S_k) + \frac{\phi_k}{m_k} + \frac{\phi_\ell}{m_\ell}
}
\label{eq:gamma-expansion}
}

Plugging in Eq~\eqref{eq:choice_of_phik}, we have $\Gamma_{uv}\ge 0 $ equivalent to 
\ba{
\bar{d}_u(S_k) - \bar{d}_u(S_\ell)+\frac{1}{2}\left( \bar{d}(S_kS_\ell) -\bar{d}(S_kS_k) \right)+ \bar{d}_v(S_\ell) - \bar{d}_v(S_k)+&\frac{1}{2}\left( \bar{d}(S_kS_\ell) -\bar{d}(S_\ell S_\ell) \right) \\
& -\frac{\beta}{2m_\ell}- \frac{\beta}{2m_k}  \ge 0
\label{eq:positive_condition_for_each_node}
}
By Chernoff bound, we have
\bas{
& P \left( \bar{d}_u(S_k) \le B_{kk}-\sqrt{\frac{6 B_{kk}\log n }{m_k}} \right) \le n^{-3} \\
& P\left( \bar{d}_u(S_\ell) \ge B_{k\ell} + \sqrt{\frac{18B_{k\ell}\log n }{m_\ell}} \right) \le n^{-3} \\
& P\left( \bar{d}(S_kS_k) \ge B_{kk} + \sqrt{\frac{18B_{kk}\log n }{m_k(m_k-1)}} \right) \le n^{-3} \\
& P\left( \bar{d}(S_kS_\ell) \le B_{k\ell} - \sqrt{\frac{6B_{k\ell}\log n }{m_km_\ell}} \right) \le n^{-3} 
}
Apply union bound we have,
\bas{
&P\left(\bar{d}_u(S_k) - \bar{d}_u(S_\ell)+\frac{1}{2}\left( \bar{d}(S_kS_\ell) -\bar{d}(S_kS_k) \right)+ \bar{d}_v(S_\ell) - \bar{d}_v(S_k)+\frac{1}{2}\left( \bar{d}(S_kS_\ell) -\bar{d}(S_\ell S_\ell) \right) \le \right. \\
& \left. \frac{1}{2}(B_{kk} - B_{k\ell})+\frac{1}{2}(B_{\ell\ell} - B_{k\ell} ) - \sqrt{6\log n}\left( \sqrt{\frac{B_{kk}}{m_k}}+\sqrt{\frac{p_\ell}{m_\ell}} \right) - \sqrt{18B_{k\ell}\log n\left(\frac{1}{m_k}+\frac{1}{m_\ell}\right)} \right) \le 4n^{-3}
}
We then apply union bound over all pairs of nodes and clusters, and combined with Eq.~\eqref{eq:beta_lowerbound}, $\Gamma_{uv}\ge 0$ for all pairs of $(u,v)$ if 
\bas{
&\frac{1}{2}(B_{kk} - B_{k\ell})+\frac{1}{2}(B_{\ell\ell} - B_{k\ell} ) - \sqrt{6\log n}\left( \sqrt{\frac{B_{kk}}{m_k}}+\sqrt{\frac{B_{\ell\ell}}{m_\ell}} \right) \\
& \quad - \sqrt{18B_{k\ell}\log n\left(\frac{1}{m_k}+\frac{1}{m_\ell}\right)} -   c \frac{\sqrt{np_{\max}}}{m_{\min}} \ge 0
}
The proof follows by relaxing $B_{kk} - B_{k\ell}$ with the minimum over all clusters.
\end{proof}

For the proof of Theorem~\ref{th:unspecified}, we use the same dual certificate construction Eq.~\eqref{eq:pd-lambda-kl}-\eqref{eq:pd-phi}, with $\beta=\lambda$. The existence of the primal-dual certificate is guaranteed by the proof of Theorem~\ref{th:exact}.

\section{Proof of Proposition \ref{prop:lambda_upper}}
\label{app:lambda_upper}
\begin{proof}
When $\lambda\ge \|A\|_{op}$, $\tilde{A} = A-\lambda I \preceq 0$. From the constraint we know that $X\succeq 0$, and has at least one eigenvalue 1 with eigenvector $\bfone/\sqrt{n}$. Consider an eigen-decomposition $X = \frac{1}{n}\bfone \bfone^T +\sum_{i=2}^n s_iu_iu_i^T$ where $s_i\ge 0$. Then the objective is
\bas{
\innerprod{\tilde{A}}{X} = \bfone^T \tilde{A}\bfone/n+\sum_i s_i u_i^T \tilde{A}u_i
}
Note that $s_i\ge 0$ and $\tilde{A}\preceq 0$, so the above objective is maximized when $s_i=0,\forall i\ge 2$. Therefore $X^* = \bfone \bfone^T/n$.
\end{proof}

\end{document}